\newcommand{\executeiffilenewer}[3]{%
	\ifnum\pdfstrcmp{\pdffilemoddate{#1}}%
	{\pdffilemoddate{#2}}>0%
	{\immediate\write18{#3}}\fi%
}
\newcommand{%
	\executeiffilenewer{.svg}{.pdf}%
	{inkscape -z -D --file=/Users/tobi/Documents/pubs/schoels-iros-2019/.svg %
		--export-pdf=/Users/tobi/Documents/pubs/schoels-iros-2019/.pdf --export-latex}%
	\input{.pdf_tex}%
}[1]{%
	\executeiffilenewer{#1.svg}{#1.pdf}%
	{inkscape -z -D --file=/Users/tobi/Documents/pubs/schoels-iros-2019/#1.svg %
		--export-pdf=/Users/tobi/Documents/pubs/schoels-iros-2019/#1.pdf --export-latex}%
	\input{#1.pdf_tex}%
}
\newcommand{\priority}[2][1]{%
	\ifnum#1>1
	\ifnum#1>10
	\else%
	\textcolor{gray}{#2}
	\fi%
	\else%
	#2%
	\fi%
}
\newcommand{\set}[1]{\mathcal{ #1 }} 
\newcommand{\ub}[1]{\overline{ #1 }} 
\newcommand{\R}{\mathbb{R}} 
\newcommand{\Id}{\mathbb{I}} 
\newcommand{\T}{^\top} 
\newcommand{\norm}[1]{\left\lVert #1 \right\rVert} 
\newcommand{\normalized}[2][]{\frac{#2}{\norm{#2}_{#1}}} 
\newcommand{\dpartial}[2]{\frac{\partial {#1}}{\partial {#2}}} 
\newcommand{\state}{\vec{x}}
\newcommand{\controls}{\vec{u}}
\newcommand{\obs}{\vec{o}}
\newcommand{\pos}{\vec{p}}
\newcommand{\cen}{\vec{c}}
\newcommand{\nidx}{\star}
\newcommand{\robot}{\set{R}}
\newcommand{\robotHull}{\ub{\robot}}
\newcommand{\free}{\set{F}^\nidx}
\newcommand{\freeRegion}[1][\cen]{\set{C}^\nidx_{#1}}
\newcommand{\occupied}{\set{O}}
\newcommand{\inistate}{\state_\mathrm{s}}
\newcommand{\terminalstate}{\state_\mathrm{g}}
\newcommand{\nstate}{{n_{\mathrm{\state}}}}
\newcommand{\ncontrols}{{n_{\mathrm{\controls}}}}
\newcommand{\dist}[1][\occupied]{d_{#1}^\nidx}
\newcommand{\sd}[1][\occupied]{\mathrm{sd}_{#1}^\nidx}
\newcommand{\penetration}[1][\occupied]{\mathrm{pen}_{#1}^\nidx}
\newcommand{\D}{\mathrm{D}}
\newcommand{\interior}{\mathrm{int}}
\newtheorem{remark}{Remark}
\newtheorem{lemma}{Lemma}
\newtheorem{assumption}{Assumption}
\newenvironment{proof}
{\textit{Proof:} }
{\hfill $\square$}
\begin{document}
\begin{frontmatter}

\title{CIAO$^\star$: MPC-based Safe Motion Planning in Predictable Dynamic Environments}

\thanks[footnoteinfo]{pronounced \emph{`ciao-star`}, where $\star$ is a wildcard that specifies the norm used for the \acl*{fr}.\\
	This research was supported by the German Federal Ministry for Economic Affairs and Energy (BMWi) via eco4wind (0324125B) and DyConPV (0324166B), by DFG via Research Unit FOR 2401, and the EU’s Horizon
	2020 research and innovation program under grant agreement No 732737
	(ILIAD).}

\author[First]{Tobias Schoels}
\author[First]{Per Rutquist}
\author[Second]{Luigi Palmieri}
\author[First]{Andrea Zanelli}
\author[Second]{Kai O. Arras}
\author[First,Third]{Moritz Diehl}

\address[First]{Department of Microsystems Engineering, University of Freiburg
	(e-mail: \{tobias.schoels, per.rutquist, andrea.zanelli, moritz.diehl\}@imtek.uni-freiburg.de)}
\address[Second]{Robert Bosch GmbH, Corporate Research, Stuttgart, Germany
	(e-mail: \{luigi.palmieri, kaioliver.arras\}@de.bosch.com)}
\address[Third]{Department of Mathematics, University of Freiburg}

\begin{abstract} 
Robots have been operating in dynamic environments and shared workspaces for decades.
Most optimization based motion planning methods, however, do not consider the movement of other agents, e.g. humans or other robots, and therefore do not guarantee collision avoidance in such scenarios.
This paper builds upon the \ac*{ciao} method and proposes a motion planning algorithm that guarantees collision avoidance in predictable dynamic environments.
Furthermore, it generalizes CIAO's free region concept to arbitrary norms and proposes a cost function to approximate time optimal motion planning.
The proposed method, CIAO$^\star$, finds kinodynamically feasible and collision free trajectories for constrained single body robots using \ac*{mpc}.
It optimizes the motion of one agent and accounts for the predicted movement of surrounding agents and obstacles.
The experimental evaluation shows that CIAO$^\nidx$ reaches close to time optimal behavior.
\end{abstract}

\begin{keyword}
	time optimal control, safety, convex optimization, predictive control, trajectory and path planning, motion control, autonomous mobile robots, dynamic environments
\end{keyword}

\end{frontmatter}

\section{Introduction}
Safe and smooth robot navigation is still an open challenge particularly for autonomous systems navigating in shared spaces with humans (e.g. intra--logistic and service robotics) and in densely crowded environments \citep{triebel2016spencer}. In these scenarios, the reactive avoidance of dynamic obstacles is an important requirement. Combined with the objective of reaching time optimal robot behavior, this poses a major challenge for motion planning and control and remains subject of active research.

%
\begin{figure}
	\centering
	\includegraphics[width=\columnwidth]{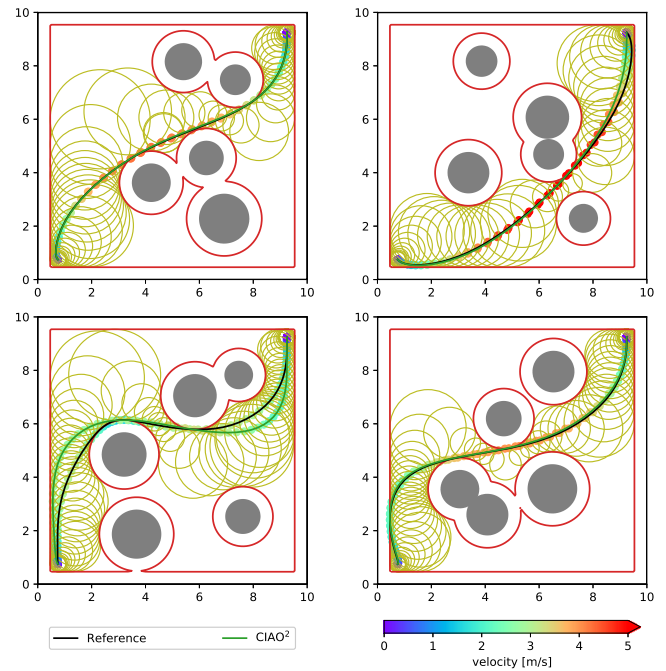}
	\caption{Example trajectories found by CIAO$^2$ in green and a time optimal reference in black. The olive colored circles mark the \acfp*{fr}, the red lines the safety margin to obstacles $\rho$.}
	\label{fig:to_traj}
\end{figure}

Recent approaches tackle collision avoidance by formulating and solving optimization problems \citep{Schulman2014, Bonalli2019, schoels2019nmpc}. These approaches offer good performance for finding locally optimal solutions but offer no guarantee to find the global optimum.
Sampling-based planners, cf. \cite{Karaman2011}, on the other hand, are asymptotically optimal and have been extended to dynamic environments \citep{Otte2015}.
They are, however, slow to converge and are therefore commonly terminated early.
The suboptimal result is then passed to a trajectory optimization algorithm, like CIAO$^\star$, the one proposed in this paper.

\subsubsection{Related Work:} \label{sec:related_work}
%
A shortcoming of most common trajectory optimization methods is their incapability to respect dynamic obstacles and kinodynamic constraints, e.g. bounds on the
acceleration, and a lack of timing in their predictions \citep{Quinlan1993, Zucker2013, Schulman2014}.
These approaches are typically limited to the optimization of paths rather than trajectories and impose constraints by introducing penalties.

Classical approaches to obstacle avoidance include \cite{Borenstein1991, Fox1997, Ko1998, Fiorini1998, Minguez2004,Quinlan1993}. In contrast to our approach, they do neither produce optimal trajectories, nor account for the robot's dynamics and constraints, nor handle the obstacles with their full shape (i.e. with convex hulls) and predicted future movements.

Popular recent trajectory optimization methods include\linebreak CHOMP \citep{Zucker2013}, TrajOpt \citep{Schulman2014}, and GuSTO \citep{Bonalli2019}, which find smooth trajectories in static environments efficiently.
An increasing number of these approaches use \ac{mpc} based formulations to obtain kinodynamically feasible trajectories, e.g. \citep{Bonalli2019, Herbert2017, Zhang2017}.
In this framework an \ac{ocp} is solved in every iteration.
The \acp{ocp} formulated by CIAO$^\nidx$ are convex, which makes it a \ac{scp} method like TrajOpt and GuSTO.
In some cases the formulated problems are linear, such that we obtain a \ac{slp} method. 

Similarly to TrajOpt \citep{Schulman2014} and GuSTO \citep{Bonalli2019}, CIAO$^\nidx$ uses a \ac{sdf} to model the environment. While they linearize the \ac{sdf}, we find a convex-inner approximation (as depicted in Fig.~\ref{fig:to_traj}) and propose a continuous time collision avoidance constraint, instead of a penalty term in the cost function.
CIAO$^\nidx$ generalizes the \ac{sdf} (and thereby also the collision avoidance constraint) to arbitrary norms, similarly to OBCA \citep{Zhang2017} and in \cite{hyun2017new}.
Moreover, as \cite{Roesmann2017}, it approximates time optimal behavior.

\ac{mpc} has been used to combine trajectory tracking and collision avoidance, e.g. \cite{Lim2008}. CIAO \citep{schoels2019nmpc} goes one step further and also allows for trajectory optimization.
Additionally it preserves feasibility across iterations using a convex collision avoidance constraint that is based on the Euclidean distance to the closest obstacle.
It has been shown to work well in dynamic environments, even for robots with nonlinear dynamics.
This paper presents a generalization of CIAO.

\subsubsection{Contribution:}
In contrast to all other methods listed above, CIAO$^\nidx$ guarantees collision avoidance in predictable dynamic environments.
Further it differs from the original CIAO \citep{schoels2019nmpc} in four regards:
\begin{itemize}
	\item CIAO$^\nidx$ is norm agnostic, such that the original CIAO is a special case where $\nidx=2$, i.e. CIAO $\equiv$ CIAO\textsuperscript{2}.
	\item The (predicted) movement of dynamic obstacles is considered explicitly during trajectory optimization.
	\item We motivate the use of a different cost function to approximate time optimal behavior.
	\item The collision avoidance constraint is generalized to robots of shapes that can be approximated by a convex, bounding polytope.
\end{itemize}
To the best of the author's knowledge this makes CIAO$^\nidx$ the first MPC approach that approximates time optimal behavior in predictable dynamic environments and guarantees collision avoidance for linear systems.
Like the original CIAO we present formulations for both offline trajectory optimization and online \ac{mpc} based obstacle avoidance and control.
In coherence with the theory, the experiments in this paper consider a linear system. CIAO$^\star$'s applicability to constrained, nonlinear systems has been demonstrated by \cite{schoels2019nmpc}.

\subsubsection{Structure:}
Sec.~\ref{sec:problem} formalizes the trajectory optimization problem in dynamic environments we want to solve. The proposed approach, CIAO$^\nidx$, is introduced in Sec.~\ref{sec:approach} including some theoretical considerations. Sec.~\ref{sec:ciao_for_motion_planning} details two algorithms that use CIAO$^\nidx$ for motion planning. The experiments and results are discussed in Sec.~\ref{sec:experiments}. A summary and an outlook is given in Sec.~\ref{sec:conclusion}. In App.~\ref{sec:taylor_ub} we introduce the \emph{Taylor upper bound}, an approach to continuous time constraint satisfaction.


\section{Time Optimal Motion Planning} \label{sec:problem}

We formalize time optimal motion planning as a continuous time \acf{ocp}:
\begin{mini!}
	{\state(\cdot), \controls(\cdot), T}
	{T} 
	{\label{eq:ocp}} 
	{} 
	\addConstraint{0}{\leq T}
	\addConstraint{\state(0)}{= \inistate}{}
	\addConstraint{\state(T)}{= \terminalstate}{}
	\addConstraint{\dot{\state}(t)}{= A \state(t) + B \controls(t),}{\quad t \in [0, T]}
	\addConstraint{(\state(t), \controls(t))}{ \in \set{H} ,}{\quad t \in [0, T]}
	\addConstraint{\emptyset}{= \interior(\robot(\state(t))) \cap \occupied(t), \label{eq:ocp_cac}}{\quad t \in [0, T],}
\end{mini!}
where $\state(\cdot): \R \rightarrow \R^{\nstate}$ denotes the robot's state, $\controls(\cdot): \R \rightarrow \R^{\ncontrols}$ is the vector of controls, $T$ is the length of the trajectory in seconds, which is minimized. The fixed vector $\inistate$ is the robot's current state, and $\terminalstate$ is the goal state.
We use the common shorthand $\dot{\state}$ to denote the derivative with respect to time, i.e. $\dot{\state} = \frac{\mathrm{d}\, \state}{\mathrm{d} t}$. The expression $A\state + B\controls$ denotes the system's linear dynamical model, the convex polytopic set $\set{H}$ implements path constraints, e.g. physical limitations of the system. The open set $\interior(\robot(\state(t))) \subset \R^n$ is the interior of the set of points occupied by the robot at time $t$ and $\occupied(t) \subset \R^n$ is the set of points occupied by obstacles at that time. Finally, $n$ is the dimension of the robot's workspace $\R^n$.
Note that for fixed $T$, problem \eqref{eq:ocp} becomes convex if \eqref{eq:ocp_cac} is removed.

\section{CIAO$^\star$: Convex Inner ApprOximation} \label{sec:approach}
In the following we detail the reformulations and parameterizations of \eqref{eq:ocp} used in this paper.
First, we introduce our implementation of the collision avoidance constraint. It is based on the concept of \acfp{fr} that utilizes the \acf{sdf}. Then we discretize \eqref{eq:ocp} followed by a discussion of safety in continuous time.

\subsection{Collision Avoidance Constraint} \label{sec:collision_avoidance}

The collision avoidance constraint \eqref{eq:ocp_cac} can be formulated using the \acf{sdf} $\sd(\cdot)$ for an arbitrary occupied set $\occupied$. The distance of a given point $\pos \in \R^n$ to the occupied set $\occupied$ is defined as
\begin{equation} \label{eq:d}
	\dist(\pos) = \min_{\mathbf{o} \in \occupied} \norm{\pos - \mathbf{o}}_\nidx.
\end{equation}
Note that $\dist(\pos) = 0$ for $\pos \in \occupied$ and that $\nidx$ is used as a wildcard, not a dual norm notation.
Further, we define the penetration depth as the distance of $\pos \in \R^n$ to the unoccupied set $\R^n \setminus \occupied$
\begin{equation} \label{eq:pen}
\penetration(\pos) = \min_{\mathbf{o} \in \R^n \setminus \occupied} \norm{\pos - \mathbf{o}}_\nidx .
\end{equation}
Note that $\penetration(\pos) = 0 $ for $ \pos \notin \occupied$.
We combine \eqref{eq:d} and \eqref{eq:pen} to obtain the \ac{sdf}
\begin{equation} \label{eq:dist}
	\sd(\pos) = \dist(\pos) - \penetration(\pos).
\end{equation}
Note that the \ac{sdf} is in general non-linear, non-convex, and non-differentiable, but continuous.
An illustration of the \ac{sdf} for $\nidx = \{1, 2, \infty \}$ is shown in Fig.~\ref{fig:sdf_example}.
\begin{figure}
	\centering
	\includegraphics[width=\columnwidth]{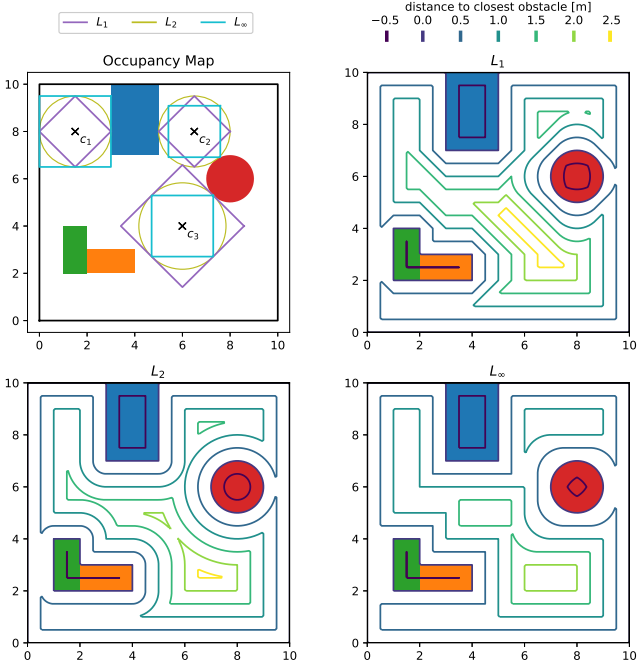}
	\caption{Contour lines of the s	igned distance function for $L_1, L_2$, and $L_\infty$ norm in an example environment.
		Top left shows the resulting \aclp*{fr} for hand picked locations.}
	\label{fig:sdf_example}
\end{figure}

The \emph{free set} $\free$ contains all points that are unoccupied:
\begin{equation} \label{eq:free}
\free = \{ \cen \in \R^n : \sd(\cen) \ge 0 \}.
\end{equation}
The full-body collision avoidance can now be formulated via the \ac{sdf} as $\interior(\robot) \subseteq \free$.
\begin{lemma} \label{lem:sd_for_cac}
	Let $\occupied$ and $\robot$ be the set of points occupied by obstacles and the robot respectively, then
	\begin{equation*}
		\occupied \cap \interior(\robot) = \emptyset \Leftrightarrow \interior(\robot) \subseteq \free \Leftrightarrow \sd(\pos) \ge 0, \; \forall\, \pos \in \robot.
	\end{equation*}
\end{lemma}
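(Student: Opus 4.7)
The plan is to handle the two equivalences separately, after first pinning down one key property of the signed distance function. From the definitions \eqref{eq:d}--\eqref{eq:dist}, a direct case analysis shows that $\sd(\pos) < 0$ exactly when $\pos \in \interior(\occupied)$, $\sd(\pos) = 0$ when $\pos \in \partial \occupied$, and $\sd(\pos) > 0$ when $\pos \notin \overline{\occupied}$. This yields the clean characterisation $\free = \{\pos \in \R^n : \pos \notin \interior(\occupied)\}$, which drives the rest of the argument.

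For the first equivalence $\occupied \cap \interior(\robot) = \emptyset \Leftrightarrow \interior(\robot) \subseteq \free$, the forward direction is immediate: any $\pos \in \interior(\robot)$ lies outside $\occupied \supseteq \interior(\occupied)$, hence $\sd(\pos) \ge 0$. For the converse I would argue by contradiction. If $\pos \in \occupied \cap \interior(\robot)$, then either $\pos \in \interior(\occupied)$, giving $\sd(\pos) < 0$ directly, or $\pos \in \partial \occupied$. In the second case I exploit that $\interior(\robot)$ is open: it contains a neighborhood of $\pos$, and since every neighborhood of a boundary point of $\occupied$ meets $\interior(\occupied)$, I can extract $\pos' \in \interior(\occupied) \cap \interior(\robot)$ with $\sd(\pos') < 0$, contradicting $\interior(\robot) \subseteq \free$.

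The second equivalence $\interior(\robot) \subseteq \free \Leftrightarrow \sd(\pos) \ge 0\,\forall\pos \in \robot$ is tight only after extending nonnegativity of $\sd$ from $\interior(\robot)$ to its boundary. The $\Leftarrow$ direction is trivial since $\interior(\robot) \subseteq \robot$. For $\Rightarrow$, I would take $\pos \in \robot \setminus \interior(\robot)$, approximate it by a sequence $(\pos_k) \subseteq \interior(\robot)$ with $\pos_k \to \pos$ (using the mild regularity assumption $\robot = \overline{\interior(\robot)}$, which holds for the convex-hulled bodies considered in the paper), and pass to the limit using continuity of $\sd$ noted just before the lemma. The main obstacle is exactly this boundary bookkeeping in the nontrivial direction of (a)$\Leftrightarrow$(b) and in the $\Rightarrow$ of (b)$\Leftrightarrow$(c): it is the only place where openness of $\interior(\robot)$, continuity of $\sd$, and regularity of $\robot$ all actually do work, so the rest of the proof reduces to the SDF case-analysis above.
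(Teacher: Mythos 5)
Your write-up is far more detailed than the paper's, which simply asserts that the lemma ``follows directly from the definitions'' of \eqref{eq:dist} and \eqref{eq:free} and gives no argument at all. Your key observation --- that the case analysis on \eqref{eq:d}--\eqref{eq:dist} yields $\free = \{\pos \in \R^n : \pos \notin \interior(\occupied)\}$, so that both equivalences reduce to topological bookkeeping --- is exactly the right way to make the assertion precise, and your handling of the second equivalence (closedness of $\free$, continuity of $\sd$, and the regularity $\robot = \overline{\interior(\robot)}$ to pass from $\interior(\robot)$ to $\robot$) is correct. One step, however, is not valid as written: in the backward direction of the first equivalence you claim that \emph{every} neighborhood of a boundary point of $\occupied$ meets $\interior(\occupied)$. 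That is false for general sets --- it requires $\occupied \subseteq \overline{\interior(\occupied)}$, i.e.\ the same regular-closedness you explicitly assume for $\robot$ but leave implicit for $\occupied$. Without it the lemma itself fails: take $\occupied$ to be a single point (or any lower-dimensional obstacle) inside $\interior(\robot)$; then $\penetration \equiv 0$, so $\sd = \dist \ge 0$ everywhere and $\interior(\robot) \subseteq \free$, yet $\occupied \cap \interior(\robot) \neq \emptyset$. So your proof is correct once you state the assumption that $\occupied$ is regular closed (full-dimensional obstacles), and in flagging where openness, continuity, and regularity actually do work you have identified a hypothesis the paper's one-line proof silently relies on.
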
%
\begin{proof}
	This follows directly from the definitions of the distance function \eqref{eq:dist} and the free set \eqref{eq:free}.
\end{proof}%

\subsubsection{Convex Free Regions (\acp{fr}):}

The collision avoidance constraint is in general non-convex, non-linear, and non-differentiable, cf. \cite{Schulman2014}. This poses a problem for derivative based optimization methods, that are used to solve the \ac{ocp} in \eqref{eq:ocp}. To overcome this problem we use a convex inner approximation of this constraint, called \acf{fr}.

For any point $\cen \in \free$ the \acl{sdf} $\sd(\cen)$ yields a \emph{\acl{fr}} $\freeRegion$, which we define as
\begin{equation} \label{eq:freeRegion}
\freeRegion = \{\pos \in \R^n : \norm{\pos - \cen}_\nidx \leq \sd(\cen) \}.
\end{equation}
Note that \ac{fr} $\freeRegion$ is fully described by its center point $\cen$ and the used norm $\nidx$.
Fig.~\ref{fig:sdf_example} shows \acp{fr} obtained for different, hand-picked points in an example environment.
Now we show that \acp{fr} are convex subsets of the free set.
\begin{lemma} \label{lem:convex_inner_approximation}
	For any free point $\cen \in \free$ the \acl{fr} $\freeRegion$ is a convex subset of $\free$, i.e. $\freeRegion \subseteq \free$.
\end{lemma}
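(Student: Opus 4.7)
The plan is to establish the two claims, convexity and inclusion, separately, since they rest on very different facts.

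For convexity, I note that $\freeRegion$ is by definition the closed ball of radius $\sd(\cen)$ centered at $\cen$ in the norm $\norm{\cdot}_\nidx$, and that this radius is nonnegative because $\cen \in \free$ implies $\sd(\cen) \geq 0$ by \eqref{eq:free}. Every closed norm ball is convex: for $\pos_1, \pos_2 \in \freeRegion$ and $\lambda \in [0,1]$, the triangle inequality together with positive homogeneity yields $\norm{\lambda \pos_1 + (1-\lambda)\pos_2 - \cen}_\nidx \leq \lambda\,\sd(\cen) + (1-\lambda)\,\sd(\cen) = \sd(\cen)$. So this half of the lemma is immediate and requires no further work.

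For the inclusion $\freeRegion \subseteq \free$, the cleanest route is via the fact that $\sd$ is $1$-Lipschitz with respect to $\norm{\cdot}_\nidx$. Granting this, any $\pos \in \freeRegion$ would satisfy $\sd(\pos) \geq \sd(\cen) - \norm{\pos - \cen}_\nidx \geq \sd(\cen) - \sd(\cen) = 0$, and therefore $\pos \in \free$ by \eqref{eq:free}. The component estimates $|\dist(\pos_1) - \dist(\pos_2)| \leq \norm{\pos_1 - \pos_2}_\nidx$ and its analogue for $\penetration$ fall out of applying the reverse triangle inequality underneath the minima in \eqref{eq:d} and \eqref{eq:pen}.

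The hard step is assembling these two bounds into a single Lipschitz estimate for $\sd = \dist - \penetration$, because exactly one of $\dist$ and $\penetration$ is nonzero at each point and $\sd$ is not smooth across $\partial\occupied$. I would close this by a short case split on the locations of $\pos_1, \pos_2$ relative to $\occupied$, chaining the bounds through a boundary crossing point on the segment between them. An alternative that sidesteps the Lipschitz lemma entirely is a direct contradiction argument: if some $\pos \in \freeRegion$ lay in $\interior(\occupied)$, then since $\cen$ is not in $\interior(\occupied)$, the segment from $\cen$ to $\pos$ would meet $\partial \occupied$ at a point $\mathbf{b} \in \occupied$ with $\norm{\cen-\mathbf{b}}_\nidx < \norm{\cen-\pos}_\nidx \leq \sd(\cen) = \dist(\cen)$, contradicting $\dist(\cen) = \min_{\mathbf{o}\in\occupied} \norm{\cen-\mathbf{o}}_\nidx$; a brief boundary check then gives $\sd(\pos) \geq 0$ in the remaining cases. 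I expect to use whichever of the two routes keeps the write-up shortest.
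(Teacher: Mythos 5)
Your proposal is correct, and the convexity half coincides with the paper's (free regions are norm balls, hence convex). For the inclusion $\freeRegion \subseteq \free$, your core estimate is the same one the paper uses --- the reverse triangle inequality applied under the minimum in \eqref{eq:d}, giving $\norm{\pos-\obs}_\nidx \ge \norm{\obs-\cen}_\nidx - \norm{\pos-\cen}_\nidx \ge \sd(\cen)-\sd(\cen) = 0$ for every $\obs\in\occupied$ --- but you package it differently. The paper stops at this one-sided lower bound on $\dist(\pos)$ and never forms a two-sided Lipschitz estimate for $\sd$; in particular it silently skips the point you correctly flag as the hard step, namely that $\dist(\pos)\ge 0$ is vacuous on its own and one must still rule out $\penetration(\pos)>0$, i.e.\ $\pos\in\interior(\occupied)$. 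Your insistence on a full $1$-Lipschitz bound for $\sd=\dist-\penetration$ is more than the lemma requires and is precisely what forces your case split across $\partial\occupied$; the cheaper closing move is your second, contradiction-based route (a point of $\freeRegion$ interior to $\occupied$ would put a point of $\occupied$ strictly closer to $\cen$ than $\dist(\cen)$), which supplies the missing step both in your first route and, arguably, in the paper's own write-up. So: same key inequality as the paper, but your argument is the more careful one; if you adopt the contradiction route and drop the Lipschitz machinery, the proof comes out both shorter and tighter than the published version.
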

\begin{proof}
	We prove this lemma in two steps. First, we observe that \aclp{fr} are norm balls and therefore convex. Second, we show that $\cen \in \free \Rightarrow \freeRegion \in \free$ by construction.\\
	Take any $\pos \in \freeRegion$ and any $\obs \in \occupied$, then the reverse triangle inequality yields $\norm{\pos - \obs}_\nidx \geq \norm{\obs - \cen}_\nidx - \norm{\pos - \cen}_\nidx $. Since $\norm{\cen - \obs}_\nidx \geq \sd(\cen)$ due to \eqref{eq:dist} and $-\norm{\pos - \cen}_\nidx \geq -\sd(c)$ due to \eqref{eq:freeRegion}, we get $\norm{\pos - \obs}_\nidx \geq \sd(\cen) - \sd(\cen) = 0$.
\end{proof}
\begin{remark}
	This lemma is a generalization of Lem.~2 by \cite{schoels2019nmpc} to arbitrary norms $\norm{\cdot}_\nidx$.
\end{remark}

\subsubsection{Full-body collision avoidance:}

To implement full-body collision avoidance efficiently, we approximate the robot's shape by a convex polytope, like \cite{Schulman2014}.
\begin{assumption} \label{ass:robot_hull}
	Assume that a finite set of points $\robotHull = \{ \mathbf{\nu}_1, \ldots, \mathbf{\nu}_{\mathrm{n_\robot}} \}$ exists, such that $\robot \subseteq \mathrm{convhull}(\robotHull)$.
\end{assumption}
\begin{remark}
Collision avoidance can be enforced by constraining the spanning vertices to a free region, i.e. $\mathrm{convhull}(\robotHull) \subseteq \freeRegion \Leftrightarrow \nu_1, \ldots, \nu_{\mathrm{n_\robot}} \in \freeRegion$.
It is easy to show that this is an inner approximation of the actual constraint, i.e. $\interior(\bar{\robot}) \subseteq \freeRegion \Rightarrow \interior(\robot) \subseteq \freeRegion \Rightarrow \interior(\robot) \subseteq \free$.
\end{remark}

\subsection{Enlarging Convex Free Regions (CFRs)} \label{sec:max_fr}

A \ac{fr} $\freeRegion$ is formed around a center point $\cen \in \free$ with radius $r = \sd(\cen)$.
If $\cen$ approaches an obstacle $r$ shrinks and in the limit ($r = \sd(\cen) =0$) the \ac{fr} collapses to a point.
Such situations result in very restrictive constraints ($\norm{\pos - \cen}_\nidx \leq r$).
To avoid this problem, we grow \acp{fr} as proposed by \cite{schoels2019nmpc}.

\begin{figure}
	\centering
	\includegraphics[width=\columnwidth]{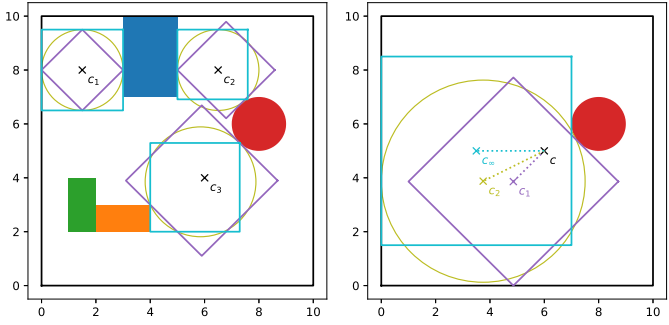}
	\caption{Enlarged \acfp{fr}. Left: enlarged regions for $L_1$ (purple), $L_2$ (olive), and $L_\infty$ (cyan) for the starting points and environment from Fig.~\ref{fig:sdf_example}.
	Right: the same for a simplified environment, including corresponding center points and search directions.}
	\label{fig:mfr_example}
\end{figure}

\begin{assumption}
	Assume that $\cen \in \free$ and that $\sd(\cen)$ is defined $\forall \cen \in \R^n$.\footnote{If $\cen \notin \free$ we follow the gradient to find a $\cen^\prime \in \free$.}
\end{assumption}
We then find a larger \ac{fr} $\freeRegion[\cen^*]$ via line search along a search direction $\mathbf{g} = \normalized[\nidx]{\nabla_{\cen} \sd(\cen)}$, i.e., the \ac{sdf}'s normalized gradient\footnote{It is sufficient to implement $\nabla_{\cen} \sd(\cen)$ using finite differences.}.
It is defined almost everywhere, except for points where $\sd(\cen)$ is undefined,e.g. ridges.
In the latter case we stop the search immediately.
For a given initial point $\cen \in \free$, we compute the optimal step size $\eta^*$ by solving
\begin{equation}\label{eq:grow_cfr}
\eta^* = \arg \underset{\eta\geq 0}{\max} \;\; \eta \;\quad \mathrm{s.t.} \quad \sd(\underbrace{\eta \cdot \mathbf{g} + \cen}_{=\cen^*}) = \eta + \sd(\cen),
\end{equation}
We obtain an enlarged \ac{fr} $\freeRegion[\cen^*]$ with center $\cen^* = \eta^* \cdot \mathbf{g} + \cen$ and radius $r^*=\sd(\cen^*)$. Note that the line search terminates on a ridge.
Figure~\ref{fig:mfr_example} shows free regions produced by this line search approach for different norms.

Solving \eqref{eq:grow_cfr} yields a new free region $\freeRegion[\cen^*]$ that includes the original one $\freeRegion$.
When solving this problem numerically we introduce a small tolerance to allow for numerical errors.

\begin{lemma} \label{lem:grow_cfr}
	If $\cen \in \free$, $\mathbf{g} \in \{g \in \R^n : \norm{g}_\nidx = 1 \} $, and $\eta \geq 0$, with $\cen^* = \eta \cdot \mathbf{g} + \cen$ and
	$\sd(\cen^*) = \eta + \sd(\cen)$ then $\freeRegion \subseteq \freeRegion[\cen^*]$.
\end{lemma}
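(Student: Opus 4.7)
The plan is to reduce the inclusion $\freeRegion \subseteq \freeRegion[\cen^*]$ to a direct application of the triangle inequality in the $\nidx$-norm, using only the definition of a \acl{fr} in \eqref{eq:freeRegion} and the two hypotheses $\norm{\mathbf{g}}_\nidx = 1$ and $\sd(\cen^*) = \sd(\cen) + \eta$. No properties of the \ac{sdf} beyond its appearance as the radius in \eqref{eq:freeRegion} are needed, so the argument is purely geometric in the ambient normed space.

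I would pick an arbitrary $\pos \in \freeRegion$, which by \eqref{eq:freeRegion} means $\norm{\pos - \cen}_\nidx \leq \sd(\cen)$, and then rewrite $\pos - \cen^* = (\pos - \cen) - \eta \mathbf{g}$. The triangle inequality for the $\nidx$-norm gives
\begin{equation*}
\norm{\pos - \cen^*}_\nidx \leq \norm{\pos - \cen}_\nidx + \eta \norm{\mathbf{g}}_\nidx .
\end{equation*}
Plugging in $\norm{\mathbf{g}}_\nidx = 1$, the bound $\norm{\pos - \cen}_\nidx \leq \sd(\cen)$, and the hypothesis $\sd(\cen^*) = \sd(\cen) + \eta$ yields $\norm{\pos - \cen^*}_\nidx \leq \sd(\cen) + \eta = \sd(\cen^*)$. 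By \eqref{eq:freeRegion} applied at $\cen^*$, this is exactly $\pos \in \freeRegion[\cen^*]$, and since $\pos$ was arbitrary we conclude $\freeRegion \subseteq \freeRegion[\cen^*]$.

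There is no genuine obstacle here: the hypothesis $\eta \geq 0$ plays no role in the algebra (it only matters for the interpretation as a forward line-search step in \eqref{eq:grow_cfr}), and the fact that $\mathbf{g}$ is a unit vector in the \emph{same} norm $\nidx$ used to define $\freeRegion$ is what makes the triangle inequality collapse cleanly to the desired bound. The only subtlety worth flagging in the write-up is that the result is valid for any norm $\norm{\cdot}_\nidx$, so the proof does not rely on Euclidean-specific properties and therefore covers the full range of norms for which $\freeRegion$ is defined, consistently with \lemref{lem:convex_inner_approximation}.
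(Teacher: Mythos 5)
Your proof is correct and rests on exactly the same computation as the paper's: the decomposition $\pos - \cen^* = (\pos - \cen) - \eta\,\mathbf{g}$ followed by the triangle inequality and the hypothesis $\sd(\cen^*) = \sd(\cen) + \eta$. The only difference is that the paper phrases it as a proof by contradiction while you argue directly, which is a purely cosmetic distinction (and your direct version is arguably the cleaner write-up).
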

\begin{proof}
	We prove this by contradiction, assuming $\exists \; \pos \in \freeRegion$ such that $\pos \notin \freeRegion[\cen^*]$.
	Using \eqref{eq:freeRegion} we rewrite our assumption to $\norm{(\eta \cdot \mathbf{g} + \cen) - \pos}_\nidx > \sd(\cen^*)$. Applying the triangle inequality on the left side yields $\norm{\pos - (\cen + \eta \cdot \bf{g})}_\nidx \leq \norm{\pos - \cen}_\nidx + \norm{\eta \cdot \bf{g}}_\nidx = \norm{\pos - \cen}_\nidx + \eta$ and based on our assumption $\norm{\pos - \cen}_\nidx + \eta \leq \sd(\cen) + \eta$ holds. Inserting this gives $\sd(\cen) + \eta > \sd(\cen^*)$ and thus contradicts the condition $\sd(\cen^*) = \eta + \sd(\cen)$.
\end{proof}
\begin{remark}
	This lemma is a generalization of Lem.~6 by \cite{schoels2019nmpc} to arbitrary norms $\norm{\cdot}_\nidx$.
\end{remark}

\subsection{Discrete Time OCP} \label{sec:discrete_time}

We discretize the \ac{ocp} \eqref{eq:ocp} by splitting the horizon into $N+1$ time steps $t_k = k \cdot \Delta t$ for $k=0, \ldots, N$ and a chosen sampling time $\Delta t$. For more compact notation, we use the shorthand $\state_k = \state(t_k)$ to denote discrete time quantities.
\begin{assumption} \label{ass:constant_controls_steady_state_at_0}
	We use piece-wise constant controls, i.e., $\controls(t) = \controls_k\, \forall t\in [t_k, t_{k+1})$, and assume that $\terminalstate$ is a steady state at $\controls=0$, i.e., $0 = A \terminalstate$.
\end{assumption}

\subsubsection{Objective function:}
We approximate time optimal behavior without time scaling, i.e., for a fixed $\Delta t$, using the stabilizing scheme proposed by \cite{Verschueren2017a}.
They show that for the case of point-to-point motion, i.e., if the robot shall move from $\inistate$ to $\terminalstate$ in minimal time, the time optimal objective function can be approximated by
\begin{align*}
 \underset{\begin{subarray}{c}
	\state_0, \ldots, \state_N,\\
	\controls_0, \ldots, \controls_{N -1}
	\end{subarray}}{\min} & \sum_{k=0}^{N-1} \alpha^k\norm{\state_k-\state_\mathrm{g}}
\end{align*}
with initial condition $\state_0 = \inistate$ and terminal constraint $\state_N = \terminalstate$ for both $N$ and $\alpha > 1$ large enough, such that time optimality is recovered.
Note that this transformation is norm-agnostic.

\subsubsection{Robot Model:}
The piece-wise constant controls allow us to discretize the dynamical model using the matrix exponential. This yields
$A_\D = e^{A\Delta t}$ and $B_\D =\left(\int_{0}^{\Delta t} e^{A t} \mathrm{d}t \right) B$
and the discrete dynamics $\state_{k+1} = A_\D \state_k + B_\D \controls_k.$

\subsubsection{Occupied Set:}
In Sec.~\ref{sec:collision_avoidance} describes a convex inner approximation of the actual collision constraint for an arbitrary occupied set $\occupied$.
We use this formulation to derive a collision avoidance constraint that can accommodate arbitrary time dependent occupied sets $\occupied(t)$ and can thus account for predictable dynamic obstacles.

The discrete time occupied set $\occupied_k$ is defined as the union of all occupied sets $\occupied(t)$ in the time interval $[t_k, t_{k+1}]$:
\begin{equation} \label{eq:occupied_k}
	\occupied_k = \bigcup_{t=t_k}^{t_{k+1}} \occupied(t) .
\end{equation}

Obviously the continuous time occupied set $\occupied(t)$ is contained in the discrete time occupied set $\occupied_k\, \forall t \in [t_k, t_{k+1}]$.
\begin{assumption}
Assume that the discrete time occupied set $\occupied_k$ is known for all $k=0,\ldots,N$. This assumption is realistic as mobile robots typically possess systems to estimate motions and states of surrounding agents.
\end{assumption}

\subsubsection{Collision Avoidance Constraint:}
To guarantee collision avoidance in continuous time, the robot's movement needs to be accounted for. For more compact notation, we define the set of all points that are occupied by the robot in the time interval $[t_k, t_{k+1}]$ as $\Omega_k = \bigcup_{t=t_k}^{t_{k+1}} \robot(\state(t))$ and introduce the shorthand $\robot_k = \robot(x_k)$.
We define the robot's action radius for all $k=0,\ldots, N$ as
\begin{equation}
	\rho_k = \underset{\pos \in \Omega_k}{\max} \; \dist[\robot_k](\pos) = \underset{\pos \in \Omega_k}{\max} \; \underset{\pos^\prime \in \robot_k}{\min} \norm{\pos - \pos^\prime}_\nidx .
\end{equation}

The \emph{Taylor upper bound} introduced in Appendix~\ref{sec:taylor_ub}, can be used to compute an upper bound for the robot's action radius.
The robot's position $\pos(t)$ can be rewritten as $\pos(t) = \pos(t_k) + (\pos(t) - \pos(t_k)) = \pos_k + \Delta \pos(t, t_k)$.
If the first $m$ derivatives of $\pos$ with respect to time are known and that the $m$\textsuperscript{th} derivative is globally bounded by $\norm{\pos^{(m)}(t)}_\nidx \leq \ub{p}^{(m)}, \forall t \in \R$, then the \emph{Taylor upper bound} yields
\begin{equation}
\norm{\Delta \pos(t, t_k)}_\nidx \leq \sum_{i=1}^{m-1} \norm{\pos^{(i)}_k}_\nidx \frac{\Delta t^i}{i!} + \ub{p}^{(m)} \frac{\Delta t^m}{m!}.
\end{equation}
\begin{assumption} \label{ass:bounded_derivatives}
	Assume that the first $m-1$ derivatives $\pos^{(1)}_k, \ldots, \pos^{(m-1)}_k$ for $k=0,\ldots,N$ are known and point wise bounded, such that $\Vert\pos^{(i)}_k \Vert_\nidx \leq \bar{p}^{(i)}$ for $i = 1,\ldots, m-1$.
\end{assumption}
This yields a global upper bound $\rho$ of the action radius $\rho_k$:
\begin{equation} \label{eq:action_radius}
\norm{\Delta \pos(t, t_k)}_\nidx \leq \rho := \sum_{i=1}^{m} \ub{p}^{(i)} \frac{\Delta t^i}{i!},
\end{equation}
for all $k=0,\ldots,N$. Note that $\rho$ is independent of the time index $k$, but requires that all $\pos^{(i)}_k$ are bounded.

Using Assumption~\ref{ass:robot_hull}, we approximate $\robot_k$ by a convex bounding polytope with vertices $\ub{\robot}_k = \{ \nu_{1, k}, \ldots, \nu_{\mathrm{n_\robot}, k} \} $.
\begin{assumption} \label{ass:linear_motion}
	For simplicity, assume that the robot is constrained to translational movement, such that the vertices are given by $\nu_{i, k} = S_{\pos} \cdot \state_k + l_i$, where $S_{\pos}$ is a selector matrix, such that $\pos = S_{\pos} \cdot \state$ is the robot's position.
\end{assumption}
Assumption \ref{ass:linear_motion} yields a convex full-body collision avoidance constraint
for $i = 1, \ldots, n_\robot$ and $k = 0, \ldots, N$
\begin{equation} \label{eq:discrete_collision_avoidance_constraint}
	\norm{\nu_{i, k} - \cen_k }_\nidx \leq \sd[\occupied_k](\cen_k) - \rho.
\end{equation}
Note that for a fixed $\cen_k$ \eqref{eq:discrete_collision_avoidance_constraint} is a conic constraint in $\nu_{i, k}$ and thereby a set of conic constraints in $\state_k$.
Lem.~\ref{lem:ctca} implies that \eqref{eq:discrete_collision_avoidance_constraint} guarantees continuous time collision avoidance.
\begin{lemma}\label{lem:ctca}
	Given the occupied set $\occupied_k$, the robot action radius $\rho \geq 0$, and a convex bounding polytope with vertices $\ub{\robot}_k = \{ \nu_{1,k}, \ldots, \nu_{\mathrm{n_\robot},k} \}$ such that $\robot_k \subseteq \mathrm{convhull}(\ub{\robot}_k)$, then
	\begin{equation*}
		\norm{\nu_{i,k} - \cen_k}_\nidx \leq \sd(\cen_k) - \rho, \quad i = 1, \ldots, n_\robot
	\end{equation*}
	implies $\interior(\robot(\state(t))) \cap \occupied(t) = \emptyset$ for all $t \in [t_k, t_{k+1}]$.
\end{lemma}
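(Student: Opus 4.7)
The plan is to chain together three containments: first, that the constraint forces the robot's bounding polytope at time $t_k$ into a shrunken norm ball of radius $\sd[\occupied_k](\cen_k)-\rho$ around $\cen_k$; second, that translating this ball by at most $\rho$ (the action radius) keeps it inside the full-size ball of radius $\sd[\occupied_k](\cen_k)$; and third, that this full-size ball is contained in the free set of $\occupied_k$, which in turn contains $\occupied(t)$ for every $t\in[t_k,t_{k+1}]$. Each step uses a single ingredient already established in the paper.

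First, I would observe that the constraint $\norm{\nu_{i,k}-\cen_k}_\nidx\le\sd[\occupied_k](\cen_k)-\rho$ places every vertex $\nu_{i,k}$ inside the closed norm ball $\set{B}=\{\pos:\norm{\pos-\cen_k}_\nidx\le\sd[\occupied_k](\cen_k)-\rho\}$. Since norm balls are convex, $\mathrm{convhull}(\ub{\robot}_k)\subseteq\set{B}$, and by Assumption~\ref{ass:robot_hull} therefore $\robot_k\subseteq\set{B}$. Next, fix any $t\in[t_k,t_{k+1}]$ and any $\pos\in\robot(\state(t))\subseteq\Omega_k$. By the definition of the action radius and the bound derived in~\eqref{eq:action_radius}, there exists $\pos'\in\robot_k$ with $\norm{\pos-\pos'}_\nidx\le\rho$. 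The triangle inequality then yields
\begin{equation*}
\norm{\pos-\cen_k}_\nidx\le\norm{\pos-\pos'}_\nidx+\norm{\pos'-\cen_k}_\nidx\le\rho+\bigl(\sd[\occupied_k](\cen_k)-\rho\bigr)=\sd[\occupied_k](\cen_k),
\end{equation*}
so $\pos$ lies in the free region $\freeRegion[\cen_k]$ built with respect to $\occupied_k$.

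Finally, I would invoke Lemma~\ref{lem:convex_inner_approximation}, applied with occupied set $\occupied_k$, to conclude $\freeRegion[\cen_k]\subseteq\free$ and hence $\freeRegion[\cen_k]\cap\occupied_k=\emptyset$. Combining with the previous step, $\robot(\state(t))\cap\occupied_k=\emptyset$, and since by~\eqref{eq:occupied_k} we have $\occupied(t)\subseteq\occupied_k$ for every $t\in[t_k,t_{k+1}]$, it follows that $\robot(\state(t))\cap\occupied(t)=\emptyset$, which in particular implies $\interior(\robot(\state(t)))\cap\occupied(t)=\emptyset$ as claimed.

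The only nontrivial ingredient is the middle step, where one must be sure that Assumption~\ref{ass:linear_motion} (translational motion) together with the Taylor bound actually produces a valid translate $\pos'\in\robot_k$ satisfying $\norm{\pos-\pos'}_\nidx\le\rho$ uniformly in the choice of point on the body; this follows because under pure translation $\robot(\state(t))=\robot_k+\Delta\pos(t,t_k)$, so one can take $\pos'=\pos-\Delta\pos(t,t_k)$ and apply~\eqref{eq:action_radius}. Everything else is triangle inequality plus a direct appeal to Lemma~\ref{lem:convex_inner_approximation}.
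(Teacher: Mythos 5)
Your proof is correct and follows essentially the same route as the paper's: the key step in both is the triangle inequality combined with the action-radius bound $\norm{\Delta\pos(t,t_k)}_\nidx\le\rho$ to show the (translated) body stays inside $\freeRegion[\cen_k]$, followed by an appeal to Lemma~\ref{lem:convex_inner_approximation}. The only cosmetic difference is that you track arbitrary body points and spell out the final containments ($\mathrm{convhull}$, $\freeRegion[\cen_k]\subseteq\free$, $\occupied(t)\subseteq\occupied_k$) explicitly, whereas the paper tracks the moving vertices $\nu_i(t)$ and leaves those closing steps implicit.
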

\begin{proof}
	We prove this lemma by showing that all vertices $\nu_{i}(t)$ for $t \in [t_k, t_{k+1}]$ and $i = 1, \ldots, n_{\robot}$ are inside the \acl{fr} $\freeRegion[\cen_k]$. First, we note $\norm{\nu_{i,k} - \cen_k}_\nidx \leq \sd(\cen_k) - \rho \Leftrightarrow \norm{\nu_{i,k} - \cen_k}_\nidx + \rho \leq \sd(\cen_k)$.
	Further, the distance between $\nu_i(t)$ and $\cen_k$ is given by $\norm{\nu_{i}(t) - \cen_k}_\nidx = \norm{S_{\pos}\state_k + l_i + \Delta \pos(t, t_k) - \cen_k}_\nidx$. Applying the triangle inequality yields $\norm{\nu_{i}(t) - \cen_k}_\nidx \leq \norm{S_{\pos}\state_k + l_i - \cen_k}_\nidx + \norm{\Delta \pos(t, t_k)}_\nidx$. From \eqref{eq:action_radius} we know $\norm{\Delta \pos(t, t_k)}_\nidx \leq \rho$, which results in $\norm{\nu_{i}(t) - \cen_k}_\nidx \leq \norm{\nu_{i,k} - \cen_k}_\nidx + \rho \leq \sd(\cen)$
\end{proof}

\begin{remark}
	This guarantee can be extended to arbitrary motion, by including an upper bound for the robot's displacement due to rotation, c.f. \cite{Schulman2014}.
\end{remark}

\subsubsection{Path Constraints:}
To obtain continuous time constraint satisfaction for the convex polyhedral set $\set{H}$ we utilize the \emph{Taylor upper bound} (see App.~\ref{sec:taylor_ub}).
This results in a smaller, convex polyhedral set.
In addition, we impose the constraints resulting from Ass.~\ref{ass:bounded_derivatives}.
The resulting discrete time path constraints form a convex polytope $\set{H}_\D$.

\subsection{The CIAO$^\star$-NLP}
Applying the reformulations detailed in the Sec.~\ref{sec:discrete_time} to \eqref{eq:ocp}, we obtain a \ac{nlp}, the CIAO$^\nidx$-NLP. It is a convex conic problem that depends on goal state $\terminalstate$, the initial state $\inistate$, the sampling time $\Delta t$, the tuple of \ac{fr} center points $C = \left( \cen_0, \ldots, \cen_N \right) $, and the corresponding radii $r_k = \sd[k](\cen_k) - \rho$, where $\rho$ is the robot's action radius \eqref{eq:action_radius}.
The CIAO$^\nidx$-NLP is given by
{\small
	\begin{mini!}
		{\vec{w}}
		{\sum_{k=0}^{N-1} \alpha^k\norm{\state_k-\state_\mathrm{g}}_{Q_\mathrm{x}} } 
		{\label{eq:lp}} 
		{} 
		\addConstraint{\state_0}{= \inistate}{}
		\addConstraint{\state_N}{= \terminalstate}{}
		\addConstraint{\state_{k+1}}{= A_\D \state_k + B_\D \controls_k ,}{\quad k =0,\ldots,N-1}
		\addConstraint{(\state_k, \controls_k)}{\in \set{H}_\D ,}{\quad k =0,\ldots,N}
		\addConstraint{\hspace{-0.7cm}\norm{ S_{\pos}\state_k + l_i - \cen_k}_\star}{\leq r_k, \; i=1,\ldots,n_\robot, \label{eq:lp_cac}}{\quad k =0,\ldots,N,}
	\end{mini!}
}%
where $\alpha > 1$ and $N$ are both large enough such that time optimal behavior is recovered \citep{Verschueren2017a}.
Here $\vec{w} = \left[\state_0\T, \controls_0\T, \ldots, \controls_{N-1}\T, \state_N\T \right]\T$ is the vector of optimization variables.
CIAO$^\nidx$ is norm agnostic, but for the sake of clear notation we introduce $Q_\mathrm{x}$ as a norm specifier.
The matrices $A_\D, B_\D$ result from the discretization of the model. The convex set $\set{H}_\D$ guarantees continuous time constraint satisfaction of the original path constraints, we choose $\controls_N = 0$ in accordance with Assumption~\ref{ass:constant_controls_steady_state_at_0}.
The collision avoidance constraint \eqref{eq:lp_cac} is a reformulation of \eqref{eq:discrete_collision_avoidance_constraint} that uses Assumption~\ref{ass:linear_motion}.

Note that \eqref{eq:lp} is a \ac{lp} if $Q_\mathrm{x}, \nidx \in \{1, \infty\}$ and can be solved efficiently, as demonstrated in Sec.~\ref{sec:experiments}.

\subsection{The CIAO$^\nidx$-Iteration} \label{sec:ciao_iteration}

The \emph{CIAO$^\nidx$-iteration} computes a new trajectory $\vec{w}^*$ for a provided initial guess $\vec{w}$ as described in Alg.~\ref{alg:ciao_iteration}.
\begin{algorithm}
	\small
	\begin{algorithmic}[1]
		\Function{CIAO$^\nidx$-iteration}{$\vec{w}\, ; \; \terminalstate, \; \inistate, \; \Delta t$}
		\State $C \gets (\cen_k = S_{\pos} \cdot \state_k$ for $k =0,\ldots,N)$
		\State $C^* \gets (\cen^* = \textsc{grow\acs{fr}}(\cen)$ for all $\cen \in C$) \Comment solve \eqref{eq:grow_cfr}
		\State $\vec{w}^* \gets$ \textsc{solveCIAO$^\nidx$-NLP}($\vec{w} ; \; C^*, \; \terminalstate, \; \inistate, \; \Delta t$) \Comment{solve \eqref{eq:lp}}
		\EndFunction \ \Return $\vec{w}^*$ \Comment{return newly found trajectory}
	\end{algorithmic}
	\caption{the CIAO$^\nidx$-iteration}
	\label{alg:ciao_iteration}
\end{algorithm}\\
First, Alg.~\ref{alg:ciao_iteration} obtains a tuple of center points $C$ using the initial guess $\vec{w}$ (Line~2). Recall that the robot's position is given by $\pos_k = S_{\pos} \state_k$.
The center points $C$ are then optimized as described in Sec.~\ref{sec:max_fr} (Line~3). The CIAO$^\star$-NLP is then solved using a suitable solver (Line~4).

Note that the CIAO$^\nidx$-iteration preserves feasibility, i.e. if the guess $w$ is feasible, $w^*$ is feasible.
In the case of robot motion planning that means: for a kinodynamically feasible and collision free initial guess $\vec{w}$ Alg.~\ref{alg:ciao_iteration} finds a kinodynamically feasible and collision free trajectory $\vec{w}^*$ that is faster or equally fast.

\section{CIAO$^\nidx$ for Motion Planning} \label{sec:ciao_for_motion_planning}

This section describes the application of the CIAO$^\star$-iteration (see Sec.~\ref{sec:ciao_iteration}) for motion planning. We propose two algorithms: one for offline trajectory optimization and a second for online motion planning and control.

\subsection{CIAO$^\nidx$ for Trajectory Optimization}

Alg.~\ref{alg:ciao_trajopt} iteratively optimizes trajectories and approximates the time optimal solution. Starting with an initial guess $\vec{w}$, it uses the CIAO$^\nidx$-iteration to improve the initial guess (Line~1).
\begin{algorithm}
	\small
	\begin{algorithmic}[1]
		\Require $\vec{w}, \inistate, \terminalstate, \Delta t, \varepsilon$ \Comment{initial guess, start and goal state}
		\State $\vec{w}^* \gets \textsc{CIAO$^\nidx$-iteration}(\vec{w}\, ; \; \terminalstate, \; \inistate, \; \Delta t)$ \Comment{see Alg.~\ref{alg:ciao_iteration}}
		\While {$\textsc{cost}(\mathbf{w}^*) - \textsc{cost}(\vec{w}) > \varepsilon$}
		\State $\mathbf{w} \gets \mathbf{w}^*$ \Comment{set last solution as initial guess}
		\State $\vec{w}^* \gets \textsc{CIAO$^\nidx$-iteration}(\vec{w}\, ; \; \terminalstate, \; \inistate, \; \Delta t)$ \Comment{see Alg.~\ref{alg:ciao_iteration}}
		\EndWhile
		\State \Return $\mathbf{w}^*$
	\end{algorithmic}
	\caption{CIAO$^\nidx$ for offline trajectory optimization}
	\label{alg:ciao_trajopt}
\end{algorithm}
Further CIAO$^\nidx$-iterations follow \mbox{(Line~3--4)}, until the improvement of the trajectory w.r.t. some cost function (e.g. CIAO$^\nidx$'s objective function) falls below a chosen threshold $\varepsilon$ (Line~2).

Note that Alg.~\ref{alg:ciao_trajopt} preserves feasibility, because the CIAO$^\nidx$-iteration does. Once it converges to a feasible trajectory, all further iterations yield feasible (and better) trajectories.

\subsection{CIAO$^\nidx$-MPC: Online Motion Planning}

Additionally, we propose CIAO$^\nidx$-MPC. This algorithm unifies trajectory optimization and tracking, also referred to as online motion planning.
To meet the time constraints of the robot's control loop, we use a horizon of fixed length $N$, which is typically shorter than the one used for trajectory optimization (as described before). As a consequence the goal might not be reachable during the now receding horizon. Therefore we replace the terminal constraint by a terminal cost and obtain an approximation of the CIAO$^\nidx$-NLP \eqref{eq:lp}, the CIAO$^\star$-MPC-NLP:
{\small
	\begin{mini}
		{\vec{w}}
		{\alpha_\mathrm{N} \norm{\state_N-\state_\mathrm{g}}_{Q_{\mathrm{x}}} + \sum_{k=0}^{N-1} \alpha^k\norm{\state_k-\state_\mathrm{g}}_{Q_{\mathrm{x}}} } 
		{\label{eq:lp_mpc}} 
		{} 
		\addConstraint{\state_0}{= \inistate}{}
		\addConstraint{A \state_N}{=0}
		\addConstraint{\state_{k+1}}{= A_\D \state_k + B_\D \controls_k ,}{\quad k =0,\ldots,N-1}
		\addConstraint{(\state_{k}, \controls_k)}{\in \set{H}_\D ,}{\quad k =0,\ldots,N}
		\addConstraint{\hspace{-0.7cm}\norm{ S_{\pos}\state_k + l_i - \cen_k}_\star}{\leq r_k, \; i=1,\ldots,n_\robot,}{\quad k =0,\ldots,N,}
	\end{mini}
}%
where $\alpha_\mathrm{N} \gg \alpha^N$ is the terminal cost's scaling factor.
The terminal constraint $A \state_N = 0$ ensures that the robot comes to a full stop at the end of the horizon, i.e. it reaches a steady state at $\controls_N = 0$.
This constraint prevents collisions that could occur when shifting the horizon forward in time.
The \ac{nlp} above replaces \eqref{eq:lp} in Line~4 of Alg.~\ref{alg:ciao_iteration}.

\begin{algorithm}
	\small
	\begin{algorithmic}[1]
		\Require $\vec{w},\; \terminalstate, \; \Delta t$ \Comment initial guess, goal state, and sampling time
		\While {$\inistate \neq \terminalstate$ } \Comment goal reached?
		\State $\inistate \gets \textsc{getCurrentState}()$
		\State $\vec{w}^* \gets \textsc{CIAO$^\nidx$-iteration}(\vec{w}\, ; \; \terminalstate, \; \inistate, \; \Delta t)$ \Comment{Alg.~\ref{alg:ciao_iteration} with \eqref{eq:lp_mpc}}
		\State \textsc{controlRobot}$(\vec{w}^*)$ \Comment apply control $\controls_0^*$
		\State $\vec{w} \gets \textsc{shiftTrajectory}(\vec{w}^*)$ \Comment recede horizon
		\EndWhile
	\end{algorithmic}
	\caption{CIAO$^\nidx$-MPC}
	\label{alg:ciao_mpc}
\end{algorithm}
Alg.~\ref{alg:ciao_mpc} formally introduces the CIAO$^\nidx$-MPC method.
While the robot is not at the goal $\terminalstate$ (Line~1), we obtain the robot's current state $\inistate$ (Line~2). Next we formulate and solve \eqref{eq:lp_mpc} as described in Alg.~\ref{alg:ciao_iteration} (Line~3). We conclude each iteration by controlling the robot (Line~4) and shifting the trajectory $\vec{w}$ (Line~5).

Under mild assumptions CIAO$^\star$-MPC has recursive feasibility, refer to Appendix~\ref{sec:recursive_feasibility} due to space constraints.\footnote{The Appendix is available at \texttt{https://arxiv.org/abs/2001.05449}.}

\section{Experiments} \label{sec:experiments}

The performance and resulting behavior of CIAO$^\nidx$ was investigated experimentally. A first set of experiments compares trajectories found by CIAO$^\nidx$ to a time optimal reference.
In a second set of experiments we compare the behavior of CIAO$^\nidx$ for $\nidx \in \{1, 2,\infty\}$.
A final set of experiments evaluates CIAO$^\nidx$-MPC's behavior.
In all experiments we consider a circular robot with puck dynamics that is controlled through its jerks ($\nstate = 6, \ncontrols = 2$) with
{\small
\begin{align}
	A_\mathrm{D} = \begin{bmatrix}
		\Id_2 & \Delta t \cdot \Id_2, & \Delta t^2 / 2 \cdot \Id_2\\
		& \Id_2 & \Delta t \cdot \Id_2\\
		& & \Id_2
	\end{bmatrix},
	&&
	B_\mathrm{D} = \begin{bmatrix}
		\Delta t^3 / 6 \cdot \Id_2\\
		\Delta t^2 / 2 \cdot \Id_2\\
		\Delta t \cdot \Id_2
	\end{bmatrix}.
\end{align}}%
The reasons for this choice are twofold:
(1) the lemmata above consider linear dynamics, the applicability of CIAO$^2$ to nonlinear robots and dynamic environments has been demonstrated by \cite{schoels2019nmpc}, and
(2) nonlinear dynamics would blur the comparison of different norms.
Note that a more complex robot shape would not affect the number of distance function evaluations, but only increase the number of constraints in the \acp{nlp}.

The initial trajectories were computed from a path found by RRT.
All \acp{nlp} were formulated as direct multiple-shooting in JuMP \citep{Dunning2017} and solved by Gurobi \citep{gurobi} on an Intel Core i7-8559U clocked at $2.7\, \rm{GHz}$ running macOS Mojave.

\subsection{Evaluation of time optimality} \label{sec:time_optimal_evaluation}

To evaluate the quality of CIAO$^\nidx$'s approximation of time optimal behavior, it has been tested in $50$ scenarios filled with 5 circular, randomly placed obstacles with radii between $1$ and $2\,\mathrm{m}$. The robot has to move from a point in the lower left of the environment through the obstacles to a point in the upper right corner.
\begin{table}
	\centering
	\begin{tabular}{r|cccc}
\toprule
               & min   & mean  & median & max   \\ \hline
  time to goal [ratio] & 1.015 & 1.024 &  1.020 & 1.038 \\
   path length [ratio] & 0.984 & 0.999 &  1.000 & 1.074 \\
control effort [ratio] & 0.925 & 0.984 &  0.998 & 1.011 \\
     clearance [ratio] & 0.999 & 0.999 &  0.999 & 0.999 \\
\bottomrule
\end{tabular}
	\caption{Evaluation of time optimal behavior approximation. We compute the normalized performance measures, obtained by taking the ratio between CIAO$^2$'s solution and the time optimal reference (see examples in Fig.~\ref{fig:to_traj}).}
	\label{tab:time_optimal_eval}
\end{table}
The values reported in Table~\ref{tab:time_optimal_eval} were obtained for CIAO$^2$.
We note that CIAO$^2$ finds close to time optimal trajectories. In the considered scenarios it is always less than $4\%$ slower.
One reason, why CIAO$^2$ does not fully converge to the time optimal solution can be seen in the bottom left plot of Fig.~\ref{fig:to_traj}. The circular boundaries of the \acp{fr} prevent motion on the safety margin (denoted by the red lines) like done by the reference in that case.
We note that the average path length and control effort is even lower than the time optimal solution, meaning that CIAO$^\nidx$ finds shorter trajectories that require less control activation, but is slightly slower than the time optimal solution.
At the same time it maintains the same or similar minimum distance to all obstacles, reported as clearance.

\subsection{Comparison of CIAO$^1$, CIAO$^2$, and CIAO$^\infty$}
\begin{figure}
	\centering
	\includegraphics[width=\columnwidth]{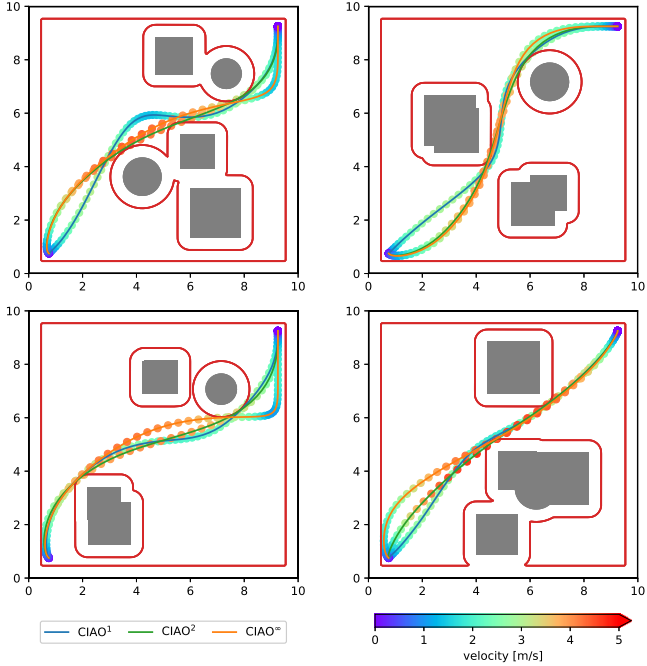}
	\caption{CIAO$^\star$ trajectories for different norms (values of $\nidx$): CIAO$^1$ (blue), CIAO$^2$ (green), and CIAO$^\infty$ (orange).
		For CIAO$^2$ the safety margin to obstacles $\rho$ is marked in red. Note that the safety margins for CIAO$^1$ \& CIAO$^\infty$ take different shapes, as depicted in Fig~\ref{fig:sdf_example}.}
	\label{fig:norm_illustration}
\end{figure}%
For these experiments we use a similar setup as for the time optimal one. The only difference is that the obstacles are now a mixture of circles and rectangles (both with the same probability).
The experimental results reported in Tables~\ref{tab:norm_comparison}~\&~\ref{tab:norm_numerical_comparison} were obtained on $50$ simulated scenarios.

\begin{table}
	\centering
	\begin{tabular}{r|ccc}
\toprule
                 & CIAO$^1$ & CIAO$^2$ & CIAO$^\infty$ \\ \hline
    success rate &    96\% &    \textbf{100}\% &         92\% \\
time to goal [s] &    5.80 (7.40)&    \textbf{5.70} (\textbf{6.30})&         5.80 (7.20)\\
 path length [m] &   \textbf{13.569} &   13.582 &        13.948 \\
                 & (15.338) & (\textbf{15.249}) &      (15.429) \\
   clearance [m] &    0.294 &    0.262 &         0.262 \\
\bottomrule
\end{tabular}
	\caption{Trajectory quality comparison for CIAO$^1$, CIAO$^2$, and CIAO$^\infty$. We report the median (and maximum) seconds and meters in simulation (see examples in Fig.~\ref{fig:norm_illustration}) of the successful scenarios.}
	\label{tab:norm_comparison}
	\begin{tabular}{r|ccc}
\toprule
                       & CIAO$^1$  & CIAO$^2$ & CIAO$^\infty$ \\ \hline
   processing time [s] &     1.709 &    1.079 &         \textbf{0.656} \\
                       &  (23.033) &  (5.598) &       (\textbf{3.182}) \\
            iterations &    26 (398) &    \textbf{8} (\textbf{45})&        10.5 (51) \\
time per iteration [s] &     0.070 &    0.127 &         \textbf{0.064} \\
                       &   (0.112) &  (0.193) &       (\textbf{0.098}) \\
iterations to feasible &     2 (5) &    2 (\textbf{4}) &         2 (6) \\
\bottomrule
\end{tabular}
	\caption{Computational effort of CIAO$^1$, CIAO$^2$, and CIAO$^\infty$ for the same experiments as in Table~\ref{tab:norm_comparison}.
		We report the median (and maximum) computation time and iterations of the successful cases.}
	\label{tab:norm_numerical_comparison}
\end{table}%
Looking at the trajectory quality comparison in Table~\ref{tab:norm_comparison}, we observe that CIAO$^1$, CIAO$^2$, and CIAO$^\infty$ obtain similar results.
CIAO$^2$ finds the fastest trajectories, while the shortest ones are found by CIAO$^1$.
The reasons for these findings are visible in Fig.~\ref{fig:norm_illustration}.
In comparison CIAO$^1$ takes a more direct route at a lower average speed.
It also maintains the highest clearance (measured by the Euclidean distance to the closest obstacle) due the diamond shape of the \acl{fr}, which is tied to the $L_1$ norm.
The diamonds are quite restrictive for diagonal movement, but become comparatively large in proximity of corners allowing for smooth maneuvering around corners (see Fig.~\ref{fig:sdf_example} \& \ref{fig:mfr_example}, $c_3$).
The $L_\infty$ norm on the other hand finds large regions in tunnels and corridors (see Fig.~\ref{fig:sdf_example}, $c_1$) and is preferred for diagonal movement, but it is restrictive in proximity of corners (see Fig.~\ref{fig:sdf_example} \& \ref{fig:mfr_example}, $c_2$ \& $c_3$) which can result in detours (see Fig.~\ref{fig:norm_illustration}).
The failures of CIAO$^1$ and CIAO$^\infty$ result from passages that were to narrow to accommodate their \acp{fr}.


In terms of computational efficiency CIAO$^\infty$ reaches the best performance. This has two reasons: First, the CIAO$^1$- and CIAO$^\infty$-NLPs are \aclp{lp}, while the CIAO$^2$-NLP is a \ac{socp}. The latter requires more computation time to solve resulting in a higher time per iteration. Second, CIAO$^\infty$ needs fewer iterations than CIAO$^1$.
We note that all algorithms typically find a feasible solution within the first $2$ iterations and CIAO$^2$ takes at most $4$ iterations, which indicates fast convergence.

In summary CIAO$^2$ finds the fastest trajectories and is better suited for cluttered environments. CIAO$^1$ and CIAO$^\infty$ reach lower computation times per iteration, but are sensitive to the orientation and shape of obstacles. In our experiments they fail to steer the robot through some narrow passages due to the shape and size of their \acp{fr}.

\subsection{CIAO$^\nidx$-MPC Evaluation}
\begin{table}
	\centering
	\begin{tabular}{r|ccc}
\toprule
               & CIAO$^1$ & CIAO$^2$ & CIAO$^\infty$ \\ \hline
  success rate &    100\% &    100\% &         100\% \\
  time to goal [ratio] &    1.193 &  \textbf{  1.037} &         1.089 \\
               &  (2.246) &  (\textbf{1.150}) &       (1.463) \\
   path length [ratio] &    1.014 &    \textbf{0.993} &         1.014 \\
               &  (1.334) &  (\textbf{1.084}) &       (1.163) \\
control effort [ratio] &    1.012 &    \textbf{0.992} &         1.016 \\
               &  (1.428) &  (\textbf{1.076}) &       (1.267) \\
     clearance [ratio] &    \textbf{1.208} &    1.004 &         1.205 \\
               &  (\textbf{1.955}) &  (1.231) &       (1.917) \\
\bottomrule
\end{tabular}
	\caption{Trajectory quality evaluation. We report the median (and maximum) of normalized ratios between the solutions found by CIAO$^\nidx$-MPC and the time optimal reference.}
	\label{tab:receding_horizon_trajectory_performance}
	\begin{tabular}{r|ccc}
\toprule
                       & CIAO$^1$    & CIAO$^2$ & CIAO$^\infty$ \\ \hline
   processing time [s] &       0.058 &    0.073 &         \textbf{0.043} \\
                       &     (0.124) &  (0.095) &       (\textbf{0.050}) \\
       solver time [s] &   4.514e-04 &    0.017 &     \textbf{4.306e-04} \\
                       & (\textbf{7.024e-04}) &  (0.026) &   (9.505e-04) \\
\bottomrule
\end{tabular}
	\caption{Computational effort of CIAO$^\nidx$-MPC for the same experiments as in Table~\ref{tab:receding_horizon_trajectory_performance}.
		We report the median (and maximum) computation time per MPC step.}
	\label{tab:receding_horizon_numerical_eval}
\end{table}%

To evaluate the trajectory quality lost due to the approximation introduced by CIAO$^\nidx$-MPC, we performed experiments on the same $50$ scenarios considered in Sec.~\ref{sec:time_optimal_evaluation}. The obtained results are reported in Tables~\ref{tab:receding_horizon_trajectory_performance} \&~\ref{tab:receding_horizon_numerical_eval}.
We use a horizon of $50$ steps resulting in a total of 406 optimization variables (plus slacks).

We observe that the trajectories found by CIAO$^2$-MPC get closest to the time optimal reference and that they are at most $15\%$ slower. CIAO$^1$ and CIAO$^\infty$ on the other hand find slower trajectories, due to some detours induced by the shapes of their \acp{fr}. This behavior shows effect in all the path length, the time to goal and the clearance.

All, CIAO$^{\{1,2,\infty\}}$, reach processing times shorter than $125\,\mathrm{ms}$ per MPC step (see Table~\ref{tab:receding_horizon_numerical_eval}). Only a small fraction of this time is required for solving the CIAO$^\star$-NLP, most of it is consumed by an inefficient implementation of the distance function used for solving \eqref{eq:grow_cfr}.
For CIAO$^{\{1, \infty\}}$ the CIAO$^\star$-MPC-NLP \eqref{eq:lp_mpc} is a \ac{lp}, which is solved in $< 1\,\mathrm{ms}$.

\section{Conclusion} \label{sec:conclusion}

This paper presents CIAO$^\nidx$, a generalization of CIAO by \cite{schoels2019nmpc} to predictable dynamic environments and arbitrary norms that approximates time optimal behavior. Evaluations in simulation show that CIAO$^\nidx$ finds close to time optimal trajectories.

Future research will investigate a theoretical guarantees for nonlinear systems and unpredictable environments. It will also include a comparison to competing approaches and further study the convergence properties of CIAO$^\nidx$.

\balance
\bibliography{syscop,additions} 

\begin{thebibliography}{21}
\providecommand{\natexlab}[1]{#1}
\providecommand{\url}[1]{\texttt{#1}}
\providecommand{\urlprefix}{URL }
\expandafter\ifx\csname urlstyle\endcsname\relax
  \providecommand{\doi}[1]{doi:\discretionary{}{}{}#1}\else
  \providecommand{\doi}{doi:\discretionary{}{}{}\begingroup
  \urlstyle{rm}\Url}\fi

\bibitem[{Bonalli et~al.(2019)Bonalli, Cauligi, Bylard, and
  Pavone}]{Bonalli2019}
Bonalli, R., Cauligi, A., Bylard, A., and Pavone, M. (2019).
\newblock {G}u{STO}: {G}uaranteed {S}equential {T}rajectory {O}ptimization via
  sequential convex programming.
\newblock In \emph{IEEE Int. Conf. Rob. Autom. (ICRA)}.

\bibitem[{Borenstein and Koren(1991)}]{Borenstein1991}
Borenstein, J. and Koren, Y. (1991).
\newblock The vector field histogram -- fast obstacle avoidance for mobile
  robots.
\newblock \emph{IEEE Trans. Rob. Autom.}, 7(3), 278 -- 288.

\bibitem[{Dunning et~al.(2017)Dunning, Huchette, and Lubin}]{Dunning2017}
Dunning, I., Huchette, J., and Lubin, M. (2017).
\newblock {JuMP}: A modeling language for mathematical optimization.
\newblock \emph{SIAM Review}, 59(2), 295--320.

\bibitem[{Fiorini and Shiller(1998)}]{Fiorini1998}
Fiorini, P. and Shiller, Z. (1998).
\newblock Motion planning in dynamic environments using velocity obstacles.
\newblock \emph{Int. J. Rob. Res.}, 17(7), 760--772.

\bibitem[{Fox et~al.(1997)Fox, Burgard, and Thrun}]{Fox1997}
Fox, D., Burgard, W., and Thrun, S. (1997).
\newblock The dynamic window approach to collision avoidance.
\newblock \emph{IEEE Rob. Autom. Mag.}, 4(1), 23 -- 33.
\newblock \doi{10.1109/100.580977}.

\bibitem[{Gurobi~Optimization(2020)}]{gurobi}
Gurobi~Optimization, L. (2020).
\newblock Gurobi optimizer reference manual.
\newblock \urlprefix\url{http://www.gurobi.com}.

\bibitem[{Herbert et~al.(2017)Herbert, Chen, Han, Bansal, Fisac, and
  Tomlin}]{Herbert2017}
Herbert, S.L., Chen, M., Han, S., Bansal, S., Fisac, J.F., and Tomlin, C.J.
  (2017).
\newblock {F}a{ST}rack: a modular framework for fast and guaranteed safe motion
  planning.
\newblock In \emph{IEEE Conf. Decis. Control (CDC)}, 1517--1522.

\bibitem[{Hyun et~al.(2017)Hyun, Vela, and Verriest}]{hyun2017new}
Hyun, N.s.P., Vela, P.A., and Verriest, E.I. (2017).
\newblock A new framework for optimal path planning of rectangular robots using
  a weighted $ l\_p $ norm.
\newblock \emph{IEEE Robotics and Automation Letters}, 2(3), 1460--1465.

\bibitem[{Karaman and Frazzoli(2011)}]{Karaman2011}
Karaman, S. and Frazzoli, E. (2011).
\newblock Sampling-based algorithms for optimal motion planning.
\newblock \emph{Int. J. Rob. Res.}, 30(7), 846--894.
\newblock \doi{10.1177/0278364911406761}.

\bibitem[{Ko and Simmons(1998)}]{Ko1998}
Ko, N.Y. and Simmons, R.G. (1998).
\newblock The lane-curvature method for local obstacle avoidance.
\newblock In \emph{IEEE/RSJ Int. Conf. Intell. Rob. Syst. (IROS)}, volume~3,
  1615 --1621.

\bibitem[{Lim et~al.(2008)Lim, Kang, Kim, Kim, and You}]{Lim2008}
Lim, H., Kang, Y., Kim, C., Kim, J., and You, B.J. (2008).
\newblock Nonlinear model predictive controller design with obstacle avoidance
  for a mobile robot.
\newblock In \emph{2008 {IEEE}/{ASME} International Conference on Mechtronic
  and Embedded Systems and Applications}. {IEEE}.

\bibitem[{Minguez and Montano(2004)}]{Minguez2004}
Minguez, J. and Montano, L. (2004).
\newblock Nearness diagram (nd) navigation: Collision avoidance in troublesome
  scenarios.
\newblock \emph{IEEE Transactions on Robotics and Automation}, 20(1), 45--59.

\bibitem[{Otte and Frazzoli(2015)}]{Otte2015}
Otte, M. and Frazzoli, E. (2015).
\newblock {RRTX}: Asymptotically optimal single-query sampling-based motion
  planning with quick replanning.
\newblock \emph{The International Journal of Robotics Research}, 35(7),
  797--822.

\bibitem[{Quinlan and Khatib(1993)}]{Quinlan1993}
Quinlan, S. and Khatib, O. (1993).
\newblock Elastic bands: Connecting path planning and control.
\newblock In \emph{IEEE Int. Conf. Rob. Autom. (ICRA)}, volume~2, 802--807.

\bibitem[{R\"osmann et~al.(2017)R\"osmann, Hoffmann, and
  Bertram}]{Roesmann2017}
R\"osmann, C., Hoffmann, F., and Bertram, T. (2017).
\newblock Integrated online trajectory planning and optimization in distinctive
  topologies.
\newblock \emph{Robotics and Autonomous Systems}, 88, 142--153.

\bibitem[{Schoels et~al.(2020)Schoels, Palmieri, Arras, and
  Diehl}]{schoels2019nmpc}
Schoels, T., Palmieri, L., Arras, K.O., and Diehl, M. (2020).
\newblock An {NMPC} approach using convex inner approximations for online
  motion planning with guaranteed collision avoidance.
\newblock In \emph{IEEE Int. Conf. Rob. Autom. (ICRA)}.

\bibitem[{Schulman et~al.(2014)Schulman, Duan, Ho, Lee, Awwal, Bradlow, Pan,
  Patil, Goldberg, and Abbeel}]{Schulman2014}
Schulman, J., Duan, Y., Ho, J., Lee, A., Awwal, I., Bradlow, H., Pan, J.,
  Patil, S., Goldberg, K., and Abbeel, P. (2014).
\newblock Motion planning with sequential convex optimization and convex
  collision checking.
\newblock \emph{Int. J. Rob. Res.}, 33(9), 1251--1270.

\bibitem[{Triebel et~al.(2016)Triebel, Arras, Alami, Beyer, Breuers, Chatila,
  Chetouani, Cremers, Evers, Fiore et~al.}]{triebel2016spencer}
Triebel, R., Arras, K., Alami, R., Beyer, L., Breuers, S., Chatila, R.,
  Chetouani, M., Cremers, D., Evers, V., Fiore, M., et~al. (2016).
\newblock Spencer: A socially aware service robot for passenger guidance and
  help in busy airports.
\newblock In \emph{Field and service robotics}, 607--622. Springer.

\bibitem[{Verschueren et~al.(2017)Verschueren, Ferreau, Zanarini, Mercang\"oz,
  and Diehl}]{Verschueren2017a}
Verschueren, R., Ferreau, H.J., Zanarini, A., Mercang\"oz, M., and Diehl, M.
  (2017).
\newblock A stabilizing nonlinear model predictive control scheme for
  time-optimal point-to-point motions.
\newblock In \emph{IEEE Conf. Decis. Control (CDC)}.

\bibitem[{Zhang et~al.(2017)Zhang, Liniger, and Borrelli}]{Zhang2017}
Zhang, X., Liniger, A., and Borrelli, F. (2017).
\newblock Optimization-based collision avoidance.
\newblock \emph{arXiv preprint arXiv:1711.03449}.

\bibitem[{Zucker et~al.(2013)Zucker, Ratliff, Dragan, Pivtoraiko, Klingensmith,
  Dellin, Bagnell, and Srinivasa}]{Zucker2013}
Zucker, M., Ratliff, N., Dragan, A.D., Pivtoraiko, M., Klingensmith, M.,
  Dellin, C.M., Bagnell, J.A., and Srinivasa, S.S. (2013).
\newblock {CHOMP}: {C}ovariant {H}amiltonian {O}ptimization for {M}otion
  {P}lanning.
\newblock \emph{Int. J. Rob. Res.}, 32(9--10), 1164--1193.

\end{thebibliography}

\appendix

\section{Taylor upper bound} \label{sec:taylor_ub}

We want to guarantee continuous time constraint satisfaction for constraints that take the form $\norm{p(t) - c } \leq r$, where $p(t)\in\R^n$ is an $m$-times differentiable function w.r.t. $t\in\R$ and $r \in \R$, $c\in\R^n$ are constant.
This problem can be approached by deriving an upper bound for the expression $\norm{p(t) - c }$.
In a first step we take the Taylor expansion of $p$ around the point $\bar{t}$:
\begin{equation} \label{eq:taylor_expansion}
	p(t) = p(\bar{t}) + \underbrace{
		\sum_{i=1}^{m-1} p^{(i)}(\bar{t}) \frac{(t-\bar{t})^i}{i!} + p^{(m)}(\tilde{t}) \frac{(t-\bar{t})^m}{m!}
	}_{=\Delta p(t; \bar{t}) \text{ with } \tilde{t} \in [\bar{t}, t]},
\end{equation}
where $p^{(i)}(\bar{t}) = \dpartial{^i p}{t^i}(\bar{t})$ for more compact notation.
Inserting \eqref{eq:taylor_expansion} into the initial expression and applying the triangle inequality we get
\begin{align}
	\norm{p(t) - c} &= \norm{p(\bar{t}) - c + \Delta p(t, \bar{t})} \\
		&\leq \norm{p(\bar{t}) - c } + \norm{\Delta p(t, \bar{t})}.
\end{align}
Under the assumption that the global upper bound of the $m$\textsuperscript{th} derivative of $p$ is known and given by $\bar{p}^{(m)} = \underset{t}{\max} \norm{p^{(m)}(t)}$,
we obtain an upper bound for $\norm{\Delta p(t; \bar{t})}$
\begin{align*}
	\norm{\Delta p(t; \bar{t})} &\leq \norm{\sum_{i=1}^{m-1} p^{(i)}(\bar{t}) \frac{(t-\bar{t})^i}{i!}} + \bar{p}^{(m)} \frac{\norm{(t-\bar{t})^m}}{m!}.
\end{align*}
Note that $\bar{p}^{(m)}$ can be interpreted as a Lipschitz-constant.
Assuming that $\norm{t-\bar{t}}$ is bounded by $\overline{\Delta t}$ and applying the triangle inequality simplifies this expression to
\begin{align}
\norm{\Delta p(t; \bar{t})} 
&\leq \sum_{i=1}^{m-1} \norm{p^{(i)}(\bar{t})} \frac{\overline{\Delta t}^i}{i!} + \bar{p}^{(m)} \frac{\overline{\Delta t}^m}{m!}.
\end{align}
Finally this yields
\begin{align}
	\norm{p(t) - c} \leq \norm{p(\bar{t}) - c } + \underbrace{\sum_{i=1}^{m-1} \norm{p^{(i)}(\bar{t})} \frac{\overline{\Delta t}^i}{i!} + \bar{p}^{(m)} \frac{\overline{\Delta t}^m}{m!}}_{=\overline{\Delta p}(\bar{t}; p^{(1)}, \ldots, p^{(m-1)}, \overline{\Delta t})} .
\end{align}
Note that $\overline{\Delta p}(\bar{t}; p^{(1)}, \ldots, p^{(m-1)}, \overline{\Delta t})$ is an upper bound on $\norm{\Delta p(t, \bar{t})}$ that does not depend on $t$. It can thus be applied to reach continuous constraint satisfaction.

\section{Recursive Feasibility} \label{sec:recursive_feasibility}

\begin{assumption} \label{ass:tightening}
	We assume
	that the actual occupied set $\occupied_{\mathrm{a}}$ is a subset of the prediction $\occupied$, i.e. $\occupied_{\mathrm{a}}(t) \subseteq \occupied(t)\, \forall t \in \R$.
\end{assumption}

\begin{assumption} \label{ass:cooperative_agents}
	We assume that the occupied set at the end of the planning horizon contains all future occupied sets.
	This means
	$r_N = \sd[\occupied_N](\cen_N) - \rho \geq \bar{l} \Rightarrow r_{N+k} = \sd[\occupied_{N+k}](\cen_N) - \rho \geq  \bar{l}\, \forall k \in \mathbb{N}$, with $\bar{l} = \max_{\nu \in \robotHull_N} \norm{\nu - \cen_N}$.
\end{assumption}
Recall that $\robotHull_N = \{ \nu_{1,N}, \ldots, \nu_{n_{\robot}, N} \}$ with $\nu_{i, N} = S_{\pos} \state_N + l_i$ for $i=1,\ldots,n_{\robot}$.

Assumption~\ref{ass:cooperative_agents} implies that the robot can stay stopped in its final position indefinitely, e.g. a car parked on a shoulder or parking spot.
\begin{lemma}[Recursive Feasibility] \label{lem:recursive_feasibility}
	If Assumptions~\ref{ass:constant_controls_steady_state_at_0}, \ref{ass:tightening}, \ref{ass:cooperative_agents} hold and $\vec{w}$ is a feasible point of \eqref{eq:lp_mpc}, CIAO$^\star$-MPC finds feasible solutions $(w^*_i)_{i\in\mathbb{N}}$ in all further iterations.
\end{lemma}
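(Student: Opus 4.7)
The plan is to prove the claim by induction on the iteration index $i$, using the classical shift-and-append construction that is standard in MPC recursive-feasibility proofs. The base case is supplied by the hypothesis that $\vec{w}$ is feasible for \eqref{eq:lp_mpc} at iteration $0$. For the inductive step I assume a feasible $\vec{w}_i^* = (\state_0^*, \controls_0^*, \ldots, \controls_{N-1}^*, \state_N^*)$ with centers $C^* = (\cen_0^*, \ldots, \cen_N^*)$ at iteration $i$, and explicitly construct a feasible candidate $\tilde{\vec{w}}$ for iteration $i+1$; existence of a feasible point then guarantees that the solver returns some $\vec{w}_{i+1}^*$.

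The candidate shifts the previous solution one step and appends a steady-state ``stay-put'' stage: $\tilde{\state}_k := \state_{k+1}^*$, $\tilde{\controls}_k := \controls_{k+1}^*$, $\tilde{\cen}_k := \cen_{k+1}^*$ for $k = 0, \ldots, N-1$, together with $\tilde{\controls}_{N-1} := 0$, $\tilde{\state}_N := \state_N^*$, and $\tilde{\cen}_N := \cen_N^*$. Assumption~\ref{ass:constant_controls_steady_state_at_0} yields $A \state_N^* = 0$, so $A_\D \state_N^* = e^{A \Delta t} \state_N^* = \state_N^*$. Consequently the appended discrete-dynamics step is consistent, the terminal constraint $A \tilde{\state}_N = 0$ still holds, and the appended path constraint $(\tilde{\state}_N, 0) \in \set{H}_\D$ is just the previous terminal path constraint. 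The initial-state equality $\tilde{\state}_0 = \state_1^*$ follows under the standing (implicit) exact-dynamics assumption.

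The dynamics and path constraints at the interior stages are inherited from $\vec{w}_i^*$ by simple relabeling. For collision avoidance at $k = 0, \ldots, N-1$, the absolute time at iteration $i+1$ step $k$ coincides with iteration $i$ step $k+1$, so Assumption~\ref{ass:tightening} ensures that the true occupied set remains contained in the prediction used previously and the radii $r_{k+1}^*$ remain valid. The main obstacle, and the whole reason for Assumption~\ref{ass:cooperative_agents}, is the newly introduced collision-avoidance constraint at the appended terminal stage $k = N$: it demands $\norm{\tilde{\nu}_{j, N} - \tilde{\cen}_N}_\star \leq \tilde{r}_N$ for every vertex index $j$, with $\tilde{r}_N = \sd[\occupied_{N+1}](\cen_N^*) - \rho$ under the original indexing. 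Terminal feasibility of $\vec{w}_i^*$ gives $\bar{l} := \max_{\nu \in \robotHull_N} \norm{\nu - \cen_N^*}_\star \leq r_N^*$, which is exactly the trigger of Assumption~\ref{ass:cooperative_agents}; the assumption then propagates the bound forward in time to yield $\tilde{r}_N \geq \bar{l}$, covering every shifted vertex and closing the induction.
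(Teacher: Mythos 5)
Your proposal is correct and follows essentially the same route as the paper: induction via the standard shift-and-append construction, using Assumption~\ref{ass:constant_controls_steady_state_at_0} to make the appended stay-put stage ($\controls = 0$, $\state_{N+1} = \state_N$) consistent with the dynamics and terminal constraint, and Assumption~\ref{ass:cooperative_agents} to carry the terminal collision-avoidance radius forward in time. The only (immaterial) difference is that you retain the previous terminal center $\cen_N^*$ for the appended stage while the paper re-centers at $\pos_N$; both choices trigger Assumption~\ref{ass:cooperative_agents} in the same way.
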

\begin{proof}
	We will prove this Lemma using induction and show that for a feasible point $\vec{w}$, a point $\vec{w}^+$ exists which is a feasible point of \eqref{eq:lp_mpc} at the next iteration.
	Thanks to Assumption~\ref{ass:constant_controls_steady_state_at_0} we know that $A x_N = 0$ with steady state control $\controls=0$ exists.
	This implies $\state_N = A_\D \state_N$.
	Therefore holds $\pos_N = S_{\pos} \state_{N} = S_{\pos} A_\mathrm{D} \state_N = S_{\pos} \state_{N+1} = \pos_{N +1}$ by construction.
	Without loss of generality we choose $\cen_{N+1} = \pos_N$ and apply Assumption~\ref{ass:cooperative_agents} to get $\norm{\pos_N + l_i - \cen_N} = \norm{l_i} \leq \sd[\occupied_{N+1}](\pos_N) - \rho$ for $i=1,\ldots,n_{\robot}$.
	Thus $\vec{w}^+ = (\state_1, \controls_1, \ldots, \state_N, 0, \state_N)$ is a feasible point of \eqref{eq:lp_mpc} at the next iteration.
\end{proof}
\begin{remark}
	Since the constraints in \eqref{eq:lp_mpc} are convex inner approximations of the actual constraints any feasible point $\vec{w}$ is also a feasible point of the original \ac{ocp}.
\end{remark}
\acrodef{fr}[CFR]{convex free region}
\acrodef{dp}[DP]{dynamic programming}
\acrodef{ilqr}[iLQR]{iterative linear quadratic regulator}
\acrodef{ip}[IP]{interior point}
\acrodef{lp}[LP]{linear program}
\acrodef{mpc}[MPC]{model predictive control}
\acrodef{nmpc}[NMPC]{nonlinear model predictive control}
\acrodef{nlp}[NLP]{nonlinear program}
\acrodef{ocp}[OCP]{optimal control problem}
\acrodef{qcqp}[QCQP]{quadratically constrained quadratic program}
\acrodef{rhc}[RHC]{receding horizon control}
\acrodef{rk4}[RK4]{Runge-Kutta method of 4\textsuperscript{th} order}
\acrodef{ros}[ROS]{Robot Operating System}
\acrodef{rti}[RTI]{real-time iteration}
\acrodef{sam}[SAM]{sequential approximate method}
\acrodef{ciao}[CIAO]{Convex Inner ApprOximation}
\acrodef{scp}[SCP]{sequential convex programming}
\acrodef{sdp}[SDP]{semi-definite program}
\acrodef{sdf}[SDF]{signed distance function}
\acrodef{slp}[SLP]{sequential linear programming}
\acrodef{sqp}[SQP]{sequential quadratic programming}
\acrodef{socp}[SOCP]{second order cone program}

\end{document}